\newtheorem{theorem}{Theorem}
\title{\LARGE \bf
EyeLS: Shadow-Guided Instrument Landing System for Intraocular Target Approaching in Robotic Eye Surgery
}
\author{Junjie Yang$^{1*}$, Zhihao Zhao$^{1}$, Siyuan Shen$^{1}$, Daniel Zapp$^{2}$, Mathias Maier$^{2}$, Kai Huang$^{3}$, \\ Nassir Navab$^{1}$, \textit{Fellow, IEEE}, and M. Ali Nasseri$^{2*}$
\thanks{$^{1}$Junjie Yang, Zhihao Zhao, Siyuan Shen and Nassir Navab (Faculty) are with the Department Informatik und Mathematik,
        Technical University of Munich, 80803 Munich, Germany
        {\tt\small \{junjie.yang, zhihao.zhao, siyuan.shen, nassir.navab\}@tum.de}}
\thanks{$^{2}$Mathias Maier and M. Ali Nasseri are with the Klinik und Poliklinik für Augenheilkunde, 
        Klinikum rechts der Isar, 81675 Munich, Germany
        {\tt\small \{daniel.zapp, mathias.maier\}@mri.tum.de}}
\thanks{$^{3}$Kai Huang is with Faculty of the School of Computer Science and Engineering, 
        Sun Yat-Sen University, 510006 Guangzhou, China
        {\tt\small huangk36@mail.sysu.edu.cn}}
\thanks{$^{*}$Junjie Yang is the corresponding author of this paper.}
}
\begin{document}

\maketitle
\thispagestyle{empty}
\pagestyle{empty}

\begin{abstract}
Robotic ophthalmic surgery is an emerging technology to facilitate high-precision interventions such as retina penetration in subretinal injection and removal of floating tissues in retinal detachment depending on the input imaging modalities such as microscopy and intraoperative OCT (iOCT).
Although iOCT is explored to locate the needle tip within its range-limited ROI, it is still difficult to coordinate iOCT’s motion with the needle, especially at the initial target-approaching stage.
Meanwhile, due to 2D perspective projection and thus the loss of depth information, current image-based methods cannot effectively estimate the needle tip’s trajectory towards both retinal and floating targets.
To address this limitation, we propose to use the shadow positions of the target and the instrument tip to estimate their relative depth position and accordingly optimize the instrument tip's insertion trajectory until the tip approaches targets within iOCT's scanning area.
Our method succeeds target approaching on a retina model, and achieves an average depth error of 0.0127 mm and 0.3473 mm for floating and retinal targets respectively in the surgical simulator without damaging the retina.
\end{abstract}

\section{Introduction}
Ophthalmic surgeries demand micron-scale precision for intraocular status estimation and instrument-tissue interaction to avoid anatomic damage.
To guarantee the high-precision instrument manipulation, intraoperative optical coherence tomography (iOCT) is integrated with microscopy in the Operation Room (OP)~\cite{dehghani2023robotic, zhou2023needle, 8793756}, especially the subretinal injection procedure~\cite{Sommersperger:21}.
However, the iOCT-guided surgical applications, such as needle-pose estimation and retina reconstructions~\cite{9502523,keller2018real,DBLP:journals/corr/abs-2301-07204,8324435,Weiss2018Fast5N,yu2015evaluation,Ehlers1306}, take effect only when the surgical instrument is placed within the iOCT's limited scanning range (commonly 6x6 mm$^{2}$ square area).
Although novel methods, such as spectrally encoded reflectometry (SER)~\cite{Tang:22} and 4D OCT are invented to assist the dynamic iOCT-instrument coordination, they still need time-consuming data processing and device validation to achieve a mature surgical integration.
Therefore, it is necessary to investigate more efficient image-guided methods to navigate instruments at the initial target-approaching stage without updating the current surgical hardware in the OP.
\begin{figure}[h]
\centering
    \includegraphics[width=0.98\columnwidth]{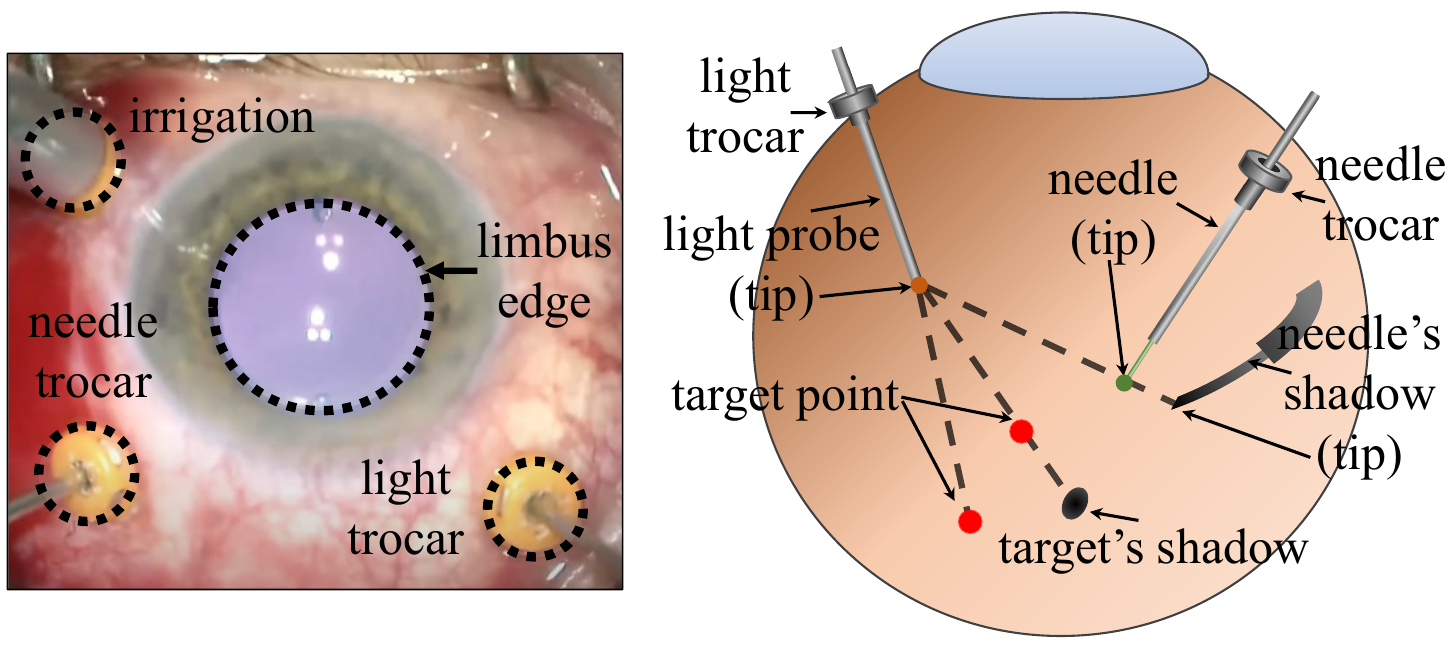}
    \caption{Trocar placement~\cite{youtube} and intraocular components in ophthalmic surgeries.}\label{fig:eyesurgery}
\end{figure}

In microscopy-guided surgeries, the core surgical tasks include extracting the relative instrument-target position and hence optimizing the instrument trajectory.
Some researchers propose to use laser and structured light to cast a visible light marker on the retina for the instrument navigation~\cite{9099104,Trimanual_article,hutchens2012characterization,6290813,doi:10.1177/0278364918778352}.
However, such light-based marker generation can only specify targets on the retina and, therefore, is limited in posterior-segment interventions.
Other methods use microscopy to estimation the absolute position of instruments inside the eye, such as stereo-camera-based depth estimation~\cite{vbaoms} with the difficulty of calibrating the stereo camera system, and deep learning~\cite{9196537,pmlr-v155-kim21a,2301.11839} to implicitly generate motion commands with the  challenge of collecting diverse microscopic image datasets for network training.

As an inspiring work of using shadows, a light-probe control method~\cite{9695979} uses the relative instrument-shadow position to estimate the instrument-retina distance and proposes a method of moving the light probe to eliminate the instrument-shadow overlapping in the microscope image.
However, this method only considers targets on the retina and has the drawback of long-time visual blockage of the retinal target by the instrument, thus causing the difficulty of monitoring the procedure of instrument-tissue interaction and affects the subsequent control of instrument motion.
Also, the instrument's approaching direction towards the retinal target is limited due to its vertical tip trajectory, which is not consistent with the common axial-oriented instrument insertion.

As far as we know, most state-of-the-art methods can only approach retinal targets and hence limits their deployment in the posterior-segment interventions. 
Although few methods can theoretically guide the instrument to reach floating targets by intraocular reconstruction, they are still faced with existing difficulties (i.e., device modification and dataset collection) as discussed above.
Therefore, there lacks a unified theory to use only monocular microscopic images to optimize the instrument tip's trajectory towards both retinal targets in posterior-segment interventions and floating targets in vitreous surgeries.

In this paper, based on the principle of intraocular shadowing, we propose a theorem that the projected intraocular target and it shadow position determine an ideal insertion orientation of the instrument, which consists of an ideal azimuthal orientation and an ideal polar orientation, so that the instrument's axial insertion will pass the target when both instrument-target collision and their shadow collision occur at the same time.
Based on the proposed theorem, we first calibrate the instrument's projected horizontal orientation towards the target and then simultaneously adjust the instrument tip's vertical orientation to ensure the instrument shadow's predicted trajectory will pass the target shadow for both retinal and floating targets during the needle's axial-insertion procedure.

The proposed method has the listed advantages over other state-of-the-art methods:
1) A unified theory to handle both floating and retinal intraocular targets;
2) Explainable and intuitive utilization of shadows for motion control;
3) The side-approaching strategy avoids visual blockage and is consistent with the direction of needle insertion.

\section{Explanation of Concept}
In this paper, we consider the instrument as a needle to cover most surgical cases.
$l$ / $p$ is the fitted line / point of an object in the image, and their subscript has the following meaning: $_{lp}$ / $_{n}$  / $_{ns}$ / $_{t}$ / $_{ts}$ / $nrcm$ refers to light probe / needle / needle-shadow / target / target-shadow / needle's RCM point (trocar), respectively.
The superscript $^{v}$ of a given line indicates that this line $l^{v}$ is vertical and perpendicular to the microscope's imaging plane.
$\mathcal{T}_{n / ns}$ is the tip trajectory of needle and needle's shadow, respectively.
$\sigma_{close}$ is the pixel threshold for choosing horizontal alignment strategy.
$\sigma_{app}$ is the pixel threshold for checking object overlapping.
$\sigma_{align}^{ang / dis}$ is the threshold for checking if horizontal alignment is achieved in the image.
$\mathcal{J}_{n} = \Delta_{R / V / H}$ is the step motion option of R(adius) / V(ertical) / H(orizontal) rotation in each loop of needle trajectory optimization.

In this paper, we emphasize the prior that according to the intraocular shadowing principle, any target $p_{t}$ is always on the segment ended by the light probe $p_{lp}$ and the target's shadow $p_{ts}$ both in the 2D microscope images and in the 3D space.
In ophthalmic surgeries, especially posterior-segment interventions, the patient is under general anesthesia, hence allowing the stable maintenance of the eyeball's pose and visual alignment by the surgeon using eyeball-orbital control~\cite{koyama2023vitreoretinal}.
In this paper, our method is based on the following assumptions: 
1) The surgical cannula needle is modeled as a cylinder with the end-effector and the needle shaft sharing the same axial line; 
2) The instrument's axial rotation ability is not considered in this paper; 
3) The retinal surface is not severely deformed, hence allowing the rectangle approximation of the end-effector's shadow shape;
4) The microscope's perspective and the needle's polar axis are vertically aligned with the eyeball.

Intraocular targets are divided into two types depending on their locations in different ophthalmic surgeries.
In vitreous interventions, objects such as detached tissue are defined as floating targets $\mathbb{P}^{floating}$ in the hollow vitreous space to be removed by peeling or sucking.
Given a light probe tip $p_{lp}$ inside the eye for illumination, the floating target's shadow is cast on the retinal surface, separated from the target itself.
Ideally, the center of the circle-shaped object in the image is regarded as the target point $p_{t} \in \mathbb{P}^{floating}$ and the center of its ellipse-shaped shadow on the retina as the target shadow point $p_{ts}$.
However, in most cases these floating objects are randomly shaped, making it hard to extract target centers as well as their shadow centers. 
Therefore, in such cases we can define the desired collision point between the needle tip and the floating object as the floating target point $p_{t}$ that is normally an edge point of the object in the image.
Subsequently, we use the intersection point of light-target line $l_{lp \to t}$ and the object's shadow edge on the retina as an approximation of the corresponding shadow point $p_{ts}$ according to the shadowing principle.

In posterior-segment interventions, objects, such as the subretinal layers or vessels, are on or beneath the retinal surface with necessary surgical manipulations such as needle insertion and medicine injection.
In such interventions, the needle's preoperatively-defined touching/insertion point on the retina is regarded as a retinal target $p_{t} \in \mathbb{P}^{retinal}$.
Considering the transparency of retinal layers, retinal targets $p_{t}^{retinal}$ are slightly deviated from their shadows $p_{ts}^{retinal}$ regardless of the light probe's location, but they are regarded equal in practical surgeries.

\begin{figure}[h]
    \centering
    \begin{subfigure}{0.492\columnwidth}
        \includegraphics[width=0.99\textwidth]{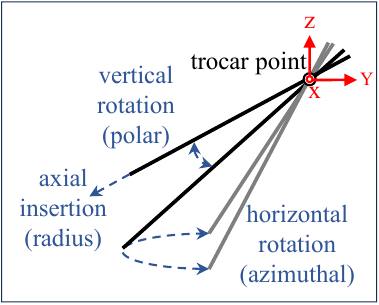}
        \caption{Spherical motion modeling.}
    \end{subfigure}
    \begin{subfigure}{0.492\columnwidth}
        \includegraphics[width=0.99\textwidth]{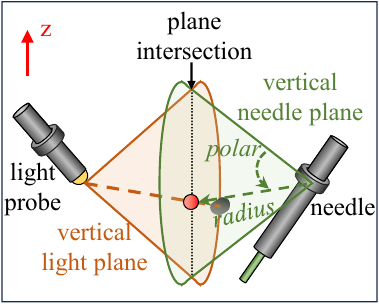}
        \caption{Vertical target exploration.}
    \end{subfigure}
    \caption{Spherical modeling for RCM and target exploration.}\label{fig:intraocularcase}
\end{figure}

Considering the common minimally-invasive surgical setup depicted in Fig.~\ref{fig:eyesurgery}, since trocars are placed through the conjunctiva and sclera to provide tunnels of penetration, the controls of both light probe and needle should keep their motion center around the trocar point, which forms the Remote Center of Motion (RCM) constraints. 
Also, the prerequisite alignment between the microscope and the robot's coordinate system brings about parallelism of the robot's z-axis and the microscope's perspective.
Therefore, the needle motion is modeled by a spherical coordinate system with its origin point at the trocar and three axis (axial, polar and azimuthal) enabled, as shown in Fig.~\ref{fig:intraocularcase}(a).
Therefore, the needle tip's status is represented by a tuple $<r, \theta^{h}, \theta^{v}>$ (radius, horizontal and vertical rotation angles), which is consistent with the spherical coordinate system.

Since the image segmentation technique is already capable of providing high-precision results, the relative position estimation in this paper utilizes the mature industrial tool YOLOv8~\cite{yolov8_ultralytics} to segment intraocular objects.

\section{Method}
\subsection{Major Steps}
The task flow of our proposed trajectory optimization and depth adjustment method is depicted in Fig.~\ref{fig:method_taskflow}, including three major steps.
\begin{figure*}[h]
\centering
    \includegraphics[width=0.98\textwidth]{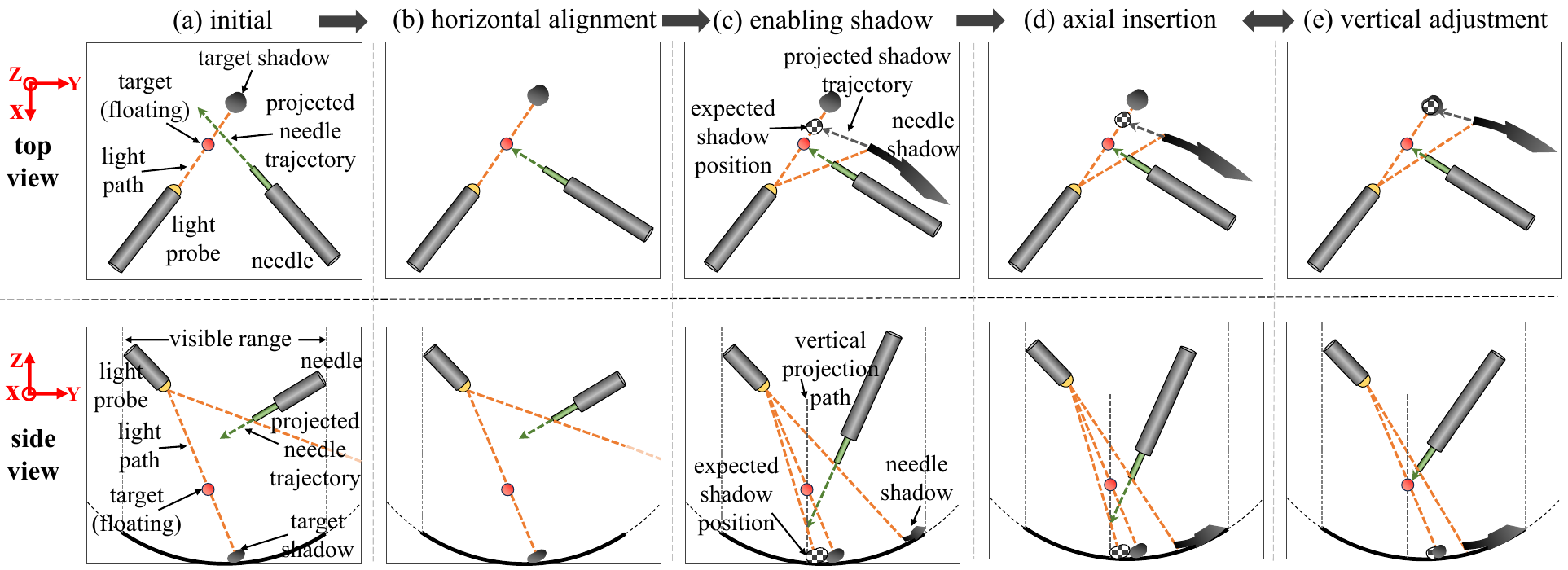}
    \caption{Task flow of the proposed method.}\label{fig:method_taskflow}
\end{figure*}
The first major step \textbf{Horizontal Alignment} is to adjust the needle's horizontal orientation by horizontal rotation until it's heading to the target in the image.
In the next major step \textbf{Shadow Enabling}, the needle tip is vertically moved towards the retina by combining vertical rotation and axial insertion to make its shadow visible in the microscope while the distance between needle-tip and target is maintained within a predefined range $\sigma_{close}$ to prevent the needle tip's unexpected invisibility.
In the third major step \textbf{Shadow Alignment}, the needle continues its \textbf{Axial Insertion} with simultaneous vertical rotation in the polar plane (vertical adjustment) to adjust the needle's predicted shadow trajectory towards the target's shadow.
Finally, the needle's landing procedure stops until both distances $d_{n\to t}$ and $d_{ns\to ts}$ reach the predefined pixel-based threshold $\sigma_{app}$, which guarantees the needle-tip's position inside the iOCT's scanning area for the subsequent micron-precision control.

\subsection{Theory Explanation}
\subsubsection{Horizontal Alignment}
The goal of orientation alignment is to adjust the needle's orientation so that the target is on the needle's insertion trajectory in the projected image, which, however, doesn't guarantee the target passing in the 3D space. 

Considering the fluctuation of the needle's segmented direction due to unstable image segmentation, the orientation alignment is divided into two stages based on the needle-target distance $d_{n\to t}$ compared with threshold $\sigma_{close}$.
When $d_{n\to t} > \sigma_{close}$, the needle is allowed to finish a rough angle-based orientation alignment due to the distance redundancy of safe needle-target interaction.
Having the needle's fixed trocar point $p_{nrcm}$ approximated by the intersection of multiple needle lines, the needle's orientation alignment is then achieved by minimizing the intersection angle between $\vec{v}_{nrcm\to n}$ (also the needle's orientation) and $\vec{v}_{nrcm\to t}$ within the angle threshold $\sigma_{align}^{ang}$.
If $d_{n\to t} \leqslant \sigma_{close}$, the orientation alignment is precisely maintained by limiting the distance from the target $p_{t}$ to the fitted needle line $l_{n}$ within a pixel-based threshold $\sigma_{align}^{dis}$ according to the practical image scale.
The orientation alignment task is running during the whole target-approaching procedure.

\subsubsection{Shadow Enabling}
If there is no needle shadow in the current microscope frame after orientation alignment as Fig.~\ref{fig:case_analysis}(a), the needle should first approach the target by axial insertion until reaching the distance threshold $\sigma_{close}$.
Subsequently, the needle is vertically rotated towards the retinal surface to make its shadow visible in the image.
However, such vertical rotation causes visual decrement of the needle's projected insertion length during the shadow-enabling procedure according to Fig.~\ref{fig:intraocularcase}(b), and thus extra axial insertions are needed to compensate the increment of needle-target distance $d_{n\to t}$ so that both the needle and its shadow appear inside the visible range.

\subsubsection{Shadow Alignment}
\begin{figure}[h]
\centering
    \includegraphics[width=0.85\columnwidth]{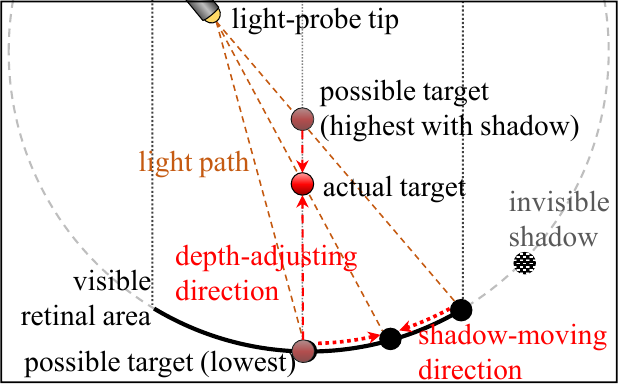}
    \caption{Possible target distribution and their corresponding shadow positions as reference of depth adjustment in the vertical plane containing light-probe tip and target point.}\label{fig:depth_concept}
\end{figure}
With the prerequisite that the light-probe tip is higher than the target, it's determined that the target casts it shadow on the retinal surface, the target located between its shadow and the light-probe tip.
With the light probe statically placed, the shadow of an object indicates the object's relative depth when the object is moving vertically, which acts as the reference of depth adjustment as shown in Fig.~\ref{fig:depth_concept} in the task of overlapping two objects according to \textit{Theorem~\ref{th:approaching}}.

\begin{theorem}\label{th:approaching}
With a static light probe $p_{lp}$, there exists a unique needle-status tuple $<r_{n}, \theta_{n}^{v}>$ after horizontal alignment to make the needle tip $p_{n}$ collide with the target $p_{t}$ if they share both same visual projection and shadow position as Equation~(\ref{eq:th_shadowing}).
\begin{equation}\label{eq:th_shadowing}
\exists! <r_{n}, \theta_{n}^{v}>, p_{n}^{3d} = p_{t}^{3d}\ \text{if}\ p_{n}=p_{t}\ \text{and}\ p_{ns}=p_{ts}
\end{equation}
\end{theorem}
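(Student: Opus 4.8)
The plan is to prove existence and uniqueness separately, working in the vertical plane that contains the light-probe tip $p_{lp}$ and the target $p_{t}$, since by the shadowing principle (and Assumption~4) all the relevant points---target, its shadow, and the light probe---are coplanar in this polar plane. The key geometric observation to exploit is that after \textbf{Horizontal Alignment} the needle's azimuthal orientation $\theta_{n}^{h}$ is already fixed, so the needle tip is constrained to move within this single vertical plane, leaving exactly two free parameters, the radius $r_{n}$ and the polar angle $\theta_{n}^{v}$. The two imposed conditions $p_{n}=p_{t}$ (same image projection) and $p_{ns}=p_{ts}$ (same shadow position) then become two scalar constraints on these two unknowns, so the problem reduces to showing that this $2\times 2$ system has precisely one solution.

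First I would set up coordinates in the polar plane with origin at the trocar point $p_{nrcm}$, writing the needle tip's 3D position as $p_{n}^{3d}(r_{n},\theta_{n}^{v})$ via the spherical parametrization $<r,\theta^{h},\theta^{v}>$ described in Fig.~\ref{fig:intraocularcase}(a). Next I would encode the two constraints. The projection constraint $p_{n}=p_{t}$ says the needle tip lies on the vertical line $l_{t}^{v}$ through the target projection (the perspective ray to $p_{t}$); this is one equation fixing the in-image coordinate of the tip. The shadow constraint $p_{ns}=p_{ts}$ says that the ray from the light probe $p_{lp}$ through the needle tip $p_{n}^{3d}$ strikes the retinal surface at exactly the target shadow point $p_{ts}$; since the target itself lies on the segment $p_{lp}p_{ts}$, requiring the needle tip to cast its shadow at $p_{ts}$ forces the tip onto that same light ray $l_{lp\to ts}$.

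The heart of the argument is then a clean intersection claim: the needle tip must simultaneously lie on the perspective ray through $p_{t}$ and on the light ray $l_{lp\to ts}$, and in the common vertical plane these are two distinct, non-parallel lines (distinct because the microscope perspective and the light-probe direction differ, non-parallel because the light probe is strictly higher than the target). Two non-parallel coplanar lines meet in a unique point, and that point is exactly $p_{t}^{3d}$, since $p_{t}$ lies on both rays by the shadowing prior. This establishes $p_{n}^{3d}=p_{t}^{3d}$ and hence collision. Finally I would invert the spherical parametrization: the unique spatial point $p_{t}^{3d}$ determines a unique distance $r_{n}=\lVert p_{t}^{3d}-p_{nrcm}\rVert$ and a unique polar angle $\theta_{n}^{v}$ (the RCM constraint makes this map injective on the reachable forward cone), yielding the claimed unique tuple $<r_{n},\theta_{n}^{v}>$.

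The main obstacle I anticipate is rigorously justifying that the two lines are genuinely non-parallel and distinct throughout the working region, i.e.\ that the microscope's viewing direction and the light ray $l_{lp\to ts}$ are never collinear---otherwise the intersection degenerates and uniqueness fails. This is where the hypothesis that the light probe is strictly above the target (stated before \textit{Theorem~\ref{th:approaching}}) and the vertical-alignment Assumption~4 must be invoked carefully to guarantee transversality. A secondary technical point is confirming that the inverse of the spherical map $<r_{n},\theta_{n}^{v}>\mapsto p_{n}^{3d}$ is single-valued on the relevant domain, which follows because the trocar-origin radial coordinates are a bijection onto the forward cone once the azimuth is fixed, but it should be stated explicitly to fully earn the uniqueness claim $\exists!$.
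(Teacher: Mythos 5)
Your proposal is correct and follows essentially the same route as the paper: after horizontal alignment fixes the azimuth, you pin the tip down as the unique transversal intersection of the target's vertical projection line $l_{t}^{v}$ with the light ray through the shadow, which is exactly the content of the paper's shadow-injectivity step (Equation~(\ref{eq:unique_t})) and its characterization $\hat{p}_{n} = l_{lp\to t} \cap l_{nrcm\to t}$ in Equation~(\ref{eq:unique_ts}), followed by the same inversion of the RCM spherical parametrization to obtain the unique tuple $<r_{n}, \theta_{n}^{v}>$. One minor imprecision: the needle's polar plane $\mathcal{P}_{needle}$ (through the trocar) need not contain $p_{lp}$ --- the paper uses two distinct vertical planes meeting along $l_{t}^{v}$ --- but since both of your lines lie in $\mathcal{P}_{light}$, your intersection argument goes through unchanged, and your transversality caveat matches the paper's degenerate failure case where $p_{lp}$, $p_{n}$ and $p_{ns}$ become collinear in the image.
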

\begin{proof}
In this proof section, $\mathcal{P}$, $l^{v}$ and superscript $\hat{}$ are used to represent a plane, a vertical projection line perpendicular to the image, and an ideal position of an object, respectively.
The microscope's imaging plane is $\mathcal{P}_{img}$.
The needle's orientation is already aligned towards the target as a prerequisite, which leads to the geometrical intraocular status as shown in Fig.~\ref{fig:intraocularcase}(b).

Given a fixed light $p_{lp}$, a target $p_{t} \neq p_{lp}$ and the target's vertical projection line $l_{t}^{v} \perp \mathcal{P}_{img}$, the light probe's vertical shadow-casting sector-shaped plane $\mathcal{P}_{light} = <p_{lp}, r_{lp}, \theta_{lp}^{v}>, \mathcal{P}_{light} \perp \mathcal{P}_{img}$ (the left sector area in Fig.~\ref{fig:intraocularcase}(b)) where the vertical angle $\theta_{lp}^{v}$ defines a unique light ray $l_{lp \to retina}$ cast from the light tip until the retinal surface, and $r_{lp}$ defines a unique point on that light ray.

Similarly, the needle's vertical exploring plane (the right sector area in Fig.~\ref{fig:intraocularcase}(b)) can be defined as $\mathcal{P}_{needle} = <p_{nrcm}, r_{n}, \theta_{n}^{v}>, \mathcal{P}_{needle} \perp \mathcal{P}_{img}$ by the needle trocar $p_{nrcm}$, needle radius $r_{n}$ and its polar angle $\theta_{n}^{v}$ based on the spherical modeling.
Also, the target's vertical projection line $l_{t}^{v}$ is in the needle's exploration plane $\mathcal{P}_{needle}$ after the needle's orientation alignment.
Then, having $\mathcal{P}_{light}$ and $\mathcal{P}_{needle}$, the equation below holds according to the shadowing principle:
\begin{equation}\label{eq:unique_t}
\begin{split}
    l_{t}^{v}\in (\mathcal{P}_{light} \cap \mathcal{P}_{needle}) \\
    p_{t}^{1}, p_{t}^{2} \in l_{t}^{v}, p_{ts}^{1} \neq p_{ts}^{2}\ \text{if }\ p_{t}^{1} \neq p_{t}^{2}
\end{split}
\end{equation}
Therefore, 
\begin{equation}\label{eq:unique_ts}
\begin{split}
p_{n} = \hat{p}_{n} = p_{t}\ \text{if}\ p_{ns} = p_{ts} \\
\hat{p}_{n} = l_{lp\to t} \cap l_{nrcm\to t} \\
\hat{p}_{n} \Leftrightarrow <\hat{r}_{n}, \hat{\theta}_{n}^{v}>\ \text{if}\ \hat{p}_{n} \in \mathcal{P}_{needle}
\end{split}
\end{equation}
if the needle $p_{n}$ is vertically moving on the target's line $l_{t}^{v}$.
Since $\hat{p}_{n} \in \mathcal{P}_{needle}$, the current tip position $p_{n}$ is corresponding to an ideal status tuple $<\hat{r}_{n}, \hat{\theta}_{n}^{v}>$ with a fixed trocar point $p_{nrcm}$.
As a result, there exits a unique needle status to overlap needle $p_{n}$ and target $p_{ts}$ in the 3D space with their shadows $p_{ns}$ and $p_{ts}$ also overlapped.

Due to the fact that the needle's trocar ($p_{nrcm}$) is fixed on the eye surface, a sequence of needle's axial insertion length (radius) is mapped to a unique needle-tip trajectory $\mathcal{T}_{n}$ and its shadow-tip trajectory $\mathcal{T}_{ns}$ if the light probe is kept fixed.
Therefore, there is a unique polar-azimuthal orientation tuple for the needle to ensure that the 3D target point is passed by the needle-tip trajectory.
According to the shadowing principle, the needle's shadow tip will be cast at the target's shadow point when the needle tip is collided with the target.
Therefore, there is a unique polar-azimuthal angle tuple for the needle to generate its ideal needle-tip trajectory $\hat{\mathcal{T}}_{n}$ that passes target point $p_{t}$ and shadow-tip trajectory $\hat{\mathcal{T}}_{ns}$ that passes target shadow $p_{ts}$ by the needle's insertion (axial translational movement).
\end{proof}

According to the \textit{Theorem~\ref{th:approaching}}, we inversely use the position between needle's shadow trajectory $\mathcal{T}_{ns}$ and target shadow $p_{ts}$ as the reference of vertical-orientation adjustment.
In practice, the future shadow trajectory is predicted by extending the current shadow along its current orientation as $\mathcal{T}_{ns} \approx \mathcal{T}_{ns}^{pred} \approx l_{ns}$, and hence simultaneous shadow alignment is achieved during the axial insertion to adjust the relative position between the target shadow's location $p_{ts}$ and the needle shadow's predicted trajectory $l_{ns}$.
Since the needle is already horizontally aligned towards the target, the needle tip is determined to reach the target in the projected image.
Therefore, we use the projected light-target ray $l_{lp\to t}$  to calculate the needle's expected shadow position $p_{esp}$ by calculating the intersection point of light ray $l_{lp\to t}$ and the predicted shadow trajectory $l_{ns}$.
This expected shadow position $p_{esp}$ indicates the prediction of the needle's shadow-tip position when the needle tip is visually overlapped with the target in the image with all points $p_{lp}$, $p_{t}$, $p_{ts}$, $p_{n}$ and $p_{ns}$ inside the same vertical exploration plane as shown in Fig.~\ref{fig:depth_concept}.
Then, the simultaneous shadow alignment is achieved by overlapping $p_{ts}$ and $p_{esp}$ which is covered by Fig.~\ref{fig:case_analysis}(b)-(f) until the needle shadow $p_{ns}$ reaches the ideal shadow position $p_{esp} \approx p_{ts}$.
Besides, the horizontal alignment is loosen to be tolerant with non-strict azimuthal rotations since the subsequent vertical offset can be compensated by the vertical adjustment guided by the shadow relationship between needle and target. 

\subsection{Decision Making}
\begin{figure}[ht]
    \centering
    \begin{subfigure}{0.48\columnwidth}
        \includegraphics[width=0.95\textwidth]{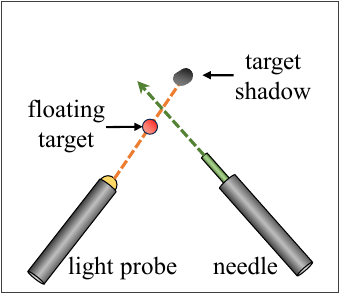}
        \caption{no visible $p_{ns}$}
    \end{subfigure}
    \begin{subfigure}{0.48\columnwidth}
        \includegraphics[width=0.95\textwidth]{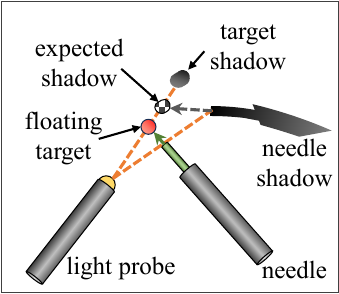}
        \caption{$\mathcal{T}_{n}^{3d}$ lower than $p_{t}^{3d}$}
    \end{subfigure}
    \begin{subfigure}{0.48\columnwidth}
        \includegraphics[width=0.95\textwidth]{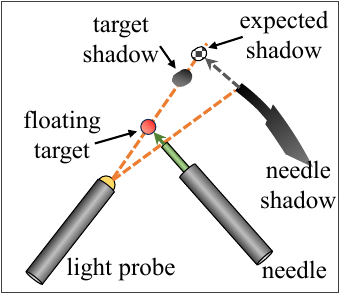}
        \caption{$\mathcal{T}_{n}^{3d}$ higher than $p_{t}^{3d}$}
    \end{subfigure}
    \begin{subfigure}{0.48\columnwidth}
        \includegraphics[width=0.95\textwidth]{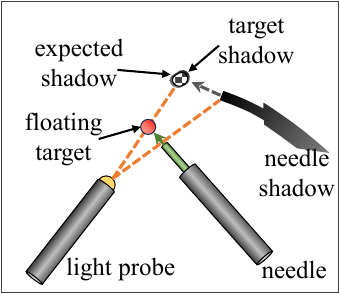}
        \caption{aligned}
    \end{subfigure}
    \begin{subfigure}{0.48\columnwidth}
        \includegraphics[width=0.95\textwidth]{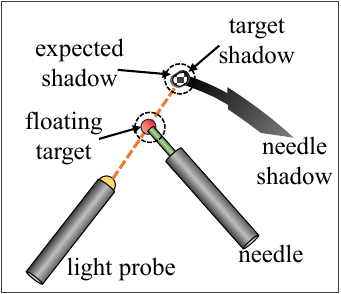}
        \caption{reach floating target}
    \end{subfigure}
    \begin{subfigure}{0.48\columnwidth}
        \includegraphics[width=0.95\textwidth]{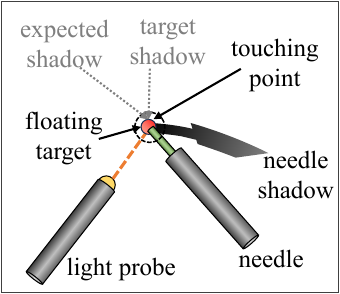}
        \caption{reach retinal target}
    \end{subfigure}
    \caption{Exclusive situations for decision making in projected microscope images.}\label{fig:case_analysis}
\end{figure}
Given a microscope image, the needle navigation problem is classified as one of six situations in Fig.~\ref{fig:case_analysis} with corresponding motion commands generated as \textbf{Algorithm~\ref{algo:shadow_align}}.
The horizontal alignment and its real-time maintenance are basic assertions as \textbf{Algo-1} line 1.
After the recognition of point $p_{lp}$, $p_{ts}$ and $p_{esp}$, the distance from $p_{lp}$ to the latter two point can be calculated as $d_{lp\to ts}$ and $d_{lp\to esp}$.
\begin{algorithm}[h]
\caption{Procedure of Shadow Alignment}\label{algo:shadow_align}
\begin{algorithmic}[1]
    \State AssertOrientationAligned()
    \State \textbf{if} $|d_{lp\to esp}-d_{lp\to ts}| \leqslant \sigma_{app}$
    \State \ \ \ \ \textbf{if} ($\|p_{n}-p_{t}\|\leqslant \sigma_{app}$) \textbf{and} ($\|p_{ns}-p_{ts}\|\leqslant \sigma_{app}$)
    \State \ \ \ \ \ \ \ \ StartiOCT()
    \State \ \ \ \ \textbf{else}
    \State \ \ \ \ \ \ \ \ \textbf{if} $\|p_{n}-p_{ns}\|\leqslant \sigma_{app}$
    \State \ \ \ \ \ \ \ \ \ \ \ \ $\mathcal{J}_{n} = \Delta_{R}^{out}\ \& \ \Delta_{V}^{up}$
    \State \ \ \ \ \ \ \ \ \textbf{else}
    \State \ \ \ \ \ \ \ \ \ \ \ \ \textbf{if} $d_{nrcm\to n} < d_{nrcm\to t}$
    \State \ \ \ \ \ \ \ \ \ \ \ \ \ \ \ \ $\mathcal{J}_{n} = \Delta_{R}^{in}$
    \State \ \ \ \ \ \ \ \ \ \ \ \ \textbf{else}
    \State \ \ \ \ \ \ \ \ \ \ \ \ \ \ \ \ $\mathcal{J}_{n} = \Delta_{R}^{out}$
    \State \textbf{else}
    \State \ \ \ \ \textbf{if} $d_{lp\to ts} < d_{lp\to esp}$
    \State \ \ \ \ \ \ \ \ $\mathcal{J}_{n} = \Delta_{V}^{down}$
    \State \ \ \ \ \textbf{else}
    \State \ \ \ \ \ \ \ \ $\mathcal{J}_{n} = \Delta_{V}^{up}$
    \State MoveNeedle($\mathcal{J}_{n}$, $\Delta_{R}$)
\end{algorithmic}
\end{algorithm}

If $d_{lp\to ts}$ and $d_{lp\to esp}$ are approximately equal within the threshold $\sigma_{app}$, both needle and its shadow are considered to be correctly aligned towards their target position and the subsequent high-precision iOCT navigation is triggered in the surgery as line 2-4 
Otherwise, the shadow-aligned needle should its axial insertion by step length $\Delta_{R}$ depending on if it's already beyond the target or falling behind as \textbf{Algo-1} line 9-12 and depicted in  Fig.~\ref{fig:case_analysis}(d).

If the estimated shadow position $p_{esp}$ is not overlapped with the target shadow $p_{ts}$, the relationship between $d_{lp\to ts}$ and $d_{lp\to esp}$ defines the relative position between the needle tip and the target.
If $d_{lp\to ts} < d_{lp\to esp}$, the needle-tip's insertion trajectory is above the target's 3D position, which causes the needle shadow has a further visual location from the light tip than the target shadow on the retinal area that the light probe is pointing at as shown in Fig.~\ref{fig:case_analysis}(c).
Correspondingly, the needle should be vertically rotated downwards to calibrate the shadow's trajectory through the target shadow as line 14-15.
By contrast, if $d_{lp\to ts} > d_{lp\to esp}$, the needle-tip's insertion trajectory is below the target's 3D position, generating a needle shadow trajectory closer to the light tip as Fig.~\ref{fig:case_analysis}(b).
Therefore, the needle should be vertically rotated upwards so that the shadow's trajectory can go through the target shadow as line 16-17.

It is worth noticing that in some situations the needle approaches the retinal surface too fast before reaching the target.
In order to eliminate the risk of retinal damage, a safety-check step with the highest priority of execution is needed as line 6-7 to move the needle tip away from the retina by rotating upwards and axial fallback to ensure the distance between needle $p_{n}$ and its shadow $p_{ns}$ is bigger than threshold $\sigma_{app}$.

\section{EXPERIMENT}
The experiments are divided into two categories: simulation and model test. 
For simulation, the automatic motion test is conducted in a Unity-based simulator with ground-truth positions and orientations given to prove the correctness of our decision making.
For the model test, a retinal model setup is 3d-printed to check whether the algorithm can guide the needle towards the target with necessary segmentation metrics provided.

\subsection{Simulation}
\begin{figure}[h]
    \centering
    \includegraphics[width=0.9\columnwidth]{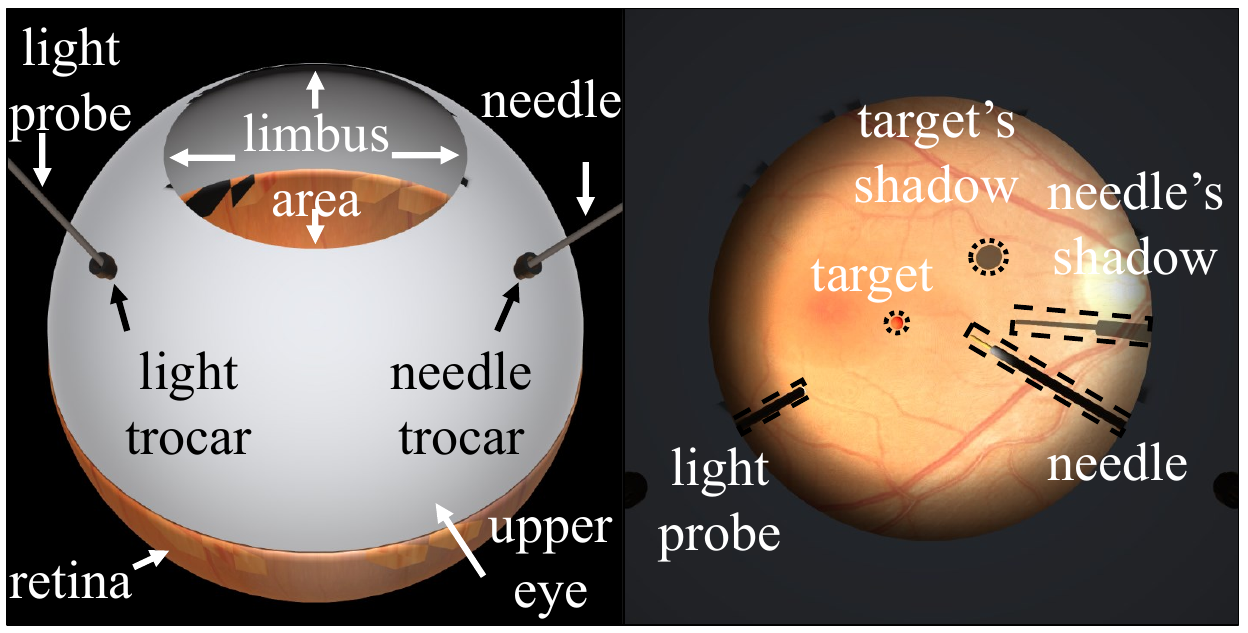}
    \caption{The simulator used for our experiment.}\label{fig:simulator}
\end{figure}
\subsubsection{Setup}
A Unity-based simulator as shown in Fig.~\ref{fig:simulator} is developed by our lab and used for the experiment of needle navigation towards retinal targets.
To simplify the complicated light control task, the light probe is statically positioned to ensure adequate brightness and shadow shaping.
The radius of the simulated spherical eye is 12 mm with a 6-mm-radius limbus area.

In this experiment, the needle-target distance is presented to demonstrate how close two objects are to each other and thus the performance of our target approaching algorithm.
Meanwhile, the needle-retina distance is presented to show if the needle tip is too close to the retina, which may cause unexpected tissue damage before conducting surgical manipulations.

Here are the hyper parameters used in the simulation: threshold for angle-based alignment $\sigma_{align}^{ang}$=3$^{\circ}$, threshold for distance-based alignment $\sigma_{align}^{dis}$=2 pixels, threshold to start distance-based alignment $\sigma_{close}$=100 pixels, threshold to trigger iOCT navigation $\sigma_{app}$=15 pixels, step angle for rotation $\Delta_{V/H}$=0.2$^{\circ}$ and step length for zooming $\Delta_{R}$=0.167 mm.
In some experiment cases, the motion parameters need further tuning to avoid being stuck at certain status.

\subsubsection{Metric}
\begin{table}[h]
\centering
\caption{Information of Target Types in Simulation}\label{tb:targetinfo}
\begin{tabular}{|c|c|c|c|c|}
\hline
type      & percent    &   * dis-x       &  * dis-y       &  * dis-z        \\ \hline
floating  & 46.78\%    &    0.60 (3.00)  &  -1.00 (4.71)  &  -9.02 (1.98)   \\ \hline
retinal   & 53.22\%    &   -0.02 (5.03)  &   0.06 (7.08)  &  -11.48 (0.08)  \\ \hline
\multicolumn{5}{|l|}{* Axis distribution (mm) consists of mean (variance).}  \\ \hline
\end{tabular}
\end{table}
For the validation of our method in the simulator, we generate \textbf{1926} random intraocular targets with their mean (variance) along each axis in TABLE~\ref{tb:targetinfo}, all targets within the limbus-visible area.
All simulated microscope images are shaped as 1024x1024 pixels.

\begin{figure}[h]
    \centering
    \includegraphics[width=0.88\columnwidth]{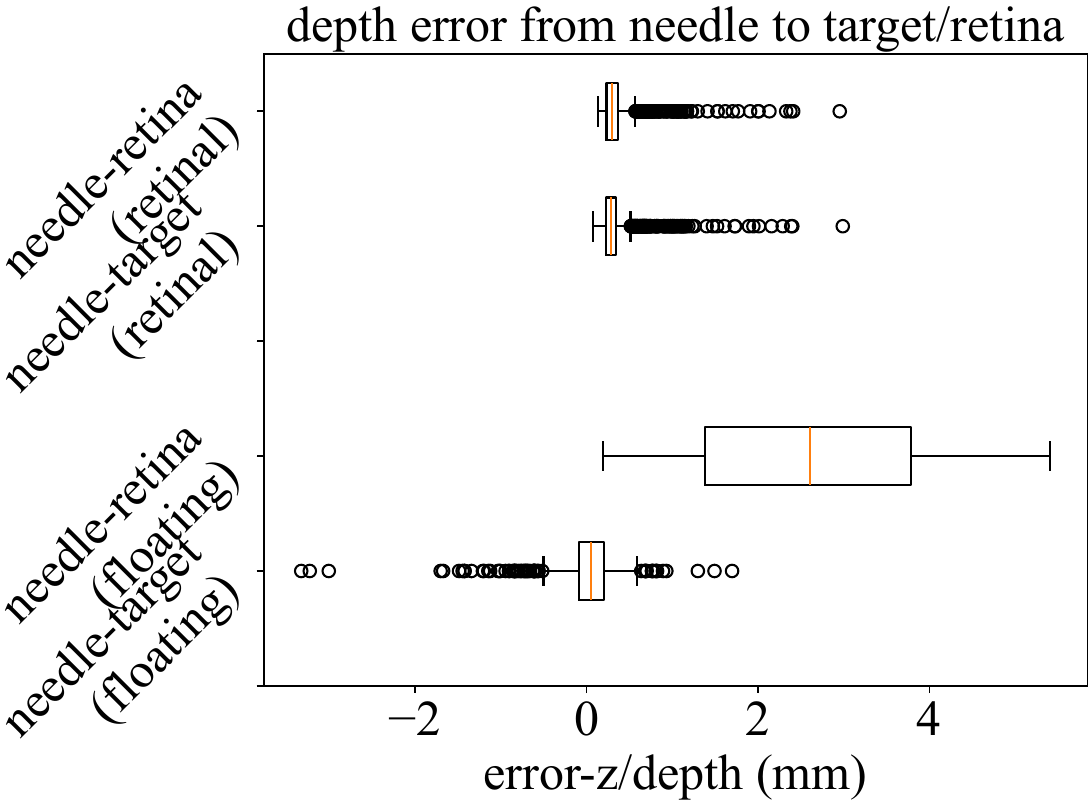}
    \caption{Needle-target depth errors for two target types.}\label{fig:deptherror}
\end{figure}
With only 1 failure as shown in Fig.~\ref{fig:diffcase} and another 3 stuck cases with tunable threshold variables, we collect the relative depths from the needle-tip to the target and from needle's shadow to the target' shadow in the remaining 1922 successful cases to demonstrate the result of vertical alignment.
The final depth error of needle to both target types after approaching is collected together with the vertical distance from the needle to the retinal surface in Fig.~\ref{fig:deptherror} and their mean values presented in Table~\ref{tb:zerror_table}.
\begin{table}[h]
\centering
\caption{Mean of Needle Tip's Depth(z-axis) Error (mm)}\label{tb:zerror_table}
\begin{tabular}{|c|c|c|c|c|}
\hline
target type  &  \multicolumn{2}{|c|}{floating} &  \multicolumn{2}{|c|}{retinal}  \\ \hline
reference   & target & retina & target & retina \\ \hline
mean &  0.0127 & 2.6161   &  0.3473  &  0.3628 \\ \hline
\end{tabular}
\end{table}
Fig.~\ref{fig:deptherror} shows the overall target-approaching performance with floating and retinal targets discussed separately.
As for floating targets, although the needle-tip position is distributed around the target with both higher and lower situations with its average depth error around 0 mm, the needle-retina depth error is always larger than 0 mm and within 4 mm, marking that the needle tip is always above the retinal surface without unexpected touching.
Similarly, the needle never touches the retinal surface when approaching retinal targets.
Also, since retinal targets are located on the retinal surface, the depth-distributions of needle-target and needle-retina are consistent.
Therefore, this depth-error distribution shows the safe needle placement of our method without damaging the retinal tissue.
At the same time, according to the detailed error values in Table.~\ref{zerror_table}, the mean value of needle-target distance along z-axis in  shows that the final needle-target distance is smaller than 0.5 mm, which proves the feasibility of our image-based target-approaching method in ophthalmic surgeries.
Also, the mean value of needle-retina distance along z-axis larger than 0.1 mm proves the safety of our algorithm by the avoidance of unexpected needle-retina collision which prevents unnecessary tissue damage.
Thanks to the inspiration of using needle-shadow tip collision to predict retina touching in \cite{9695979}, we achieved damage-free needle-tip control in our method.

\begin{figure*}[h]
\centering
    \begin{subfigure}{0.245\textwidth}
        \includegraphics[width=0.99\textwidth]{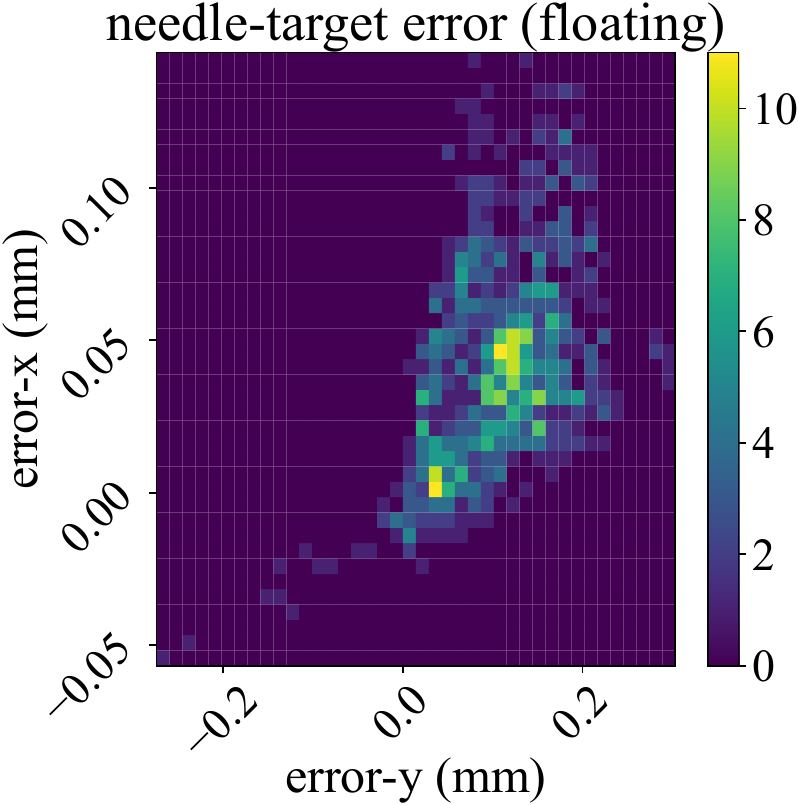}
        \caption{needle error (floating)}
    \end{subfigure}
    \begin{subfigure}{0.245\textwidth}
        \includegraphics[width=0.99\textwidth]{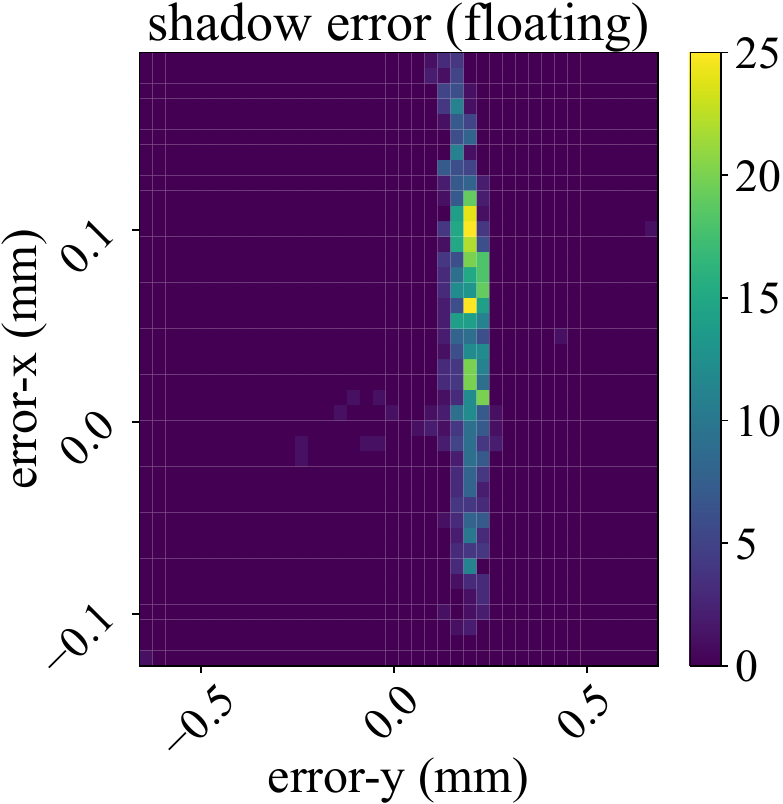}
        \caption{shadow error (floating)}
    \end{subfigure}
    \begin{subfigure}{0.245\textwidth}
        \includegraphics[width=0.99\textwidth]{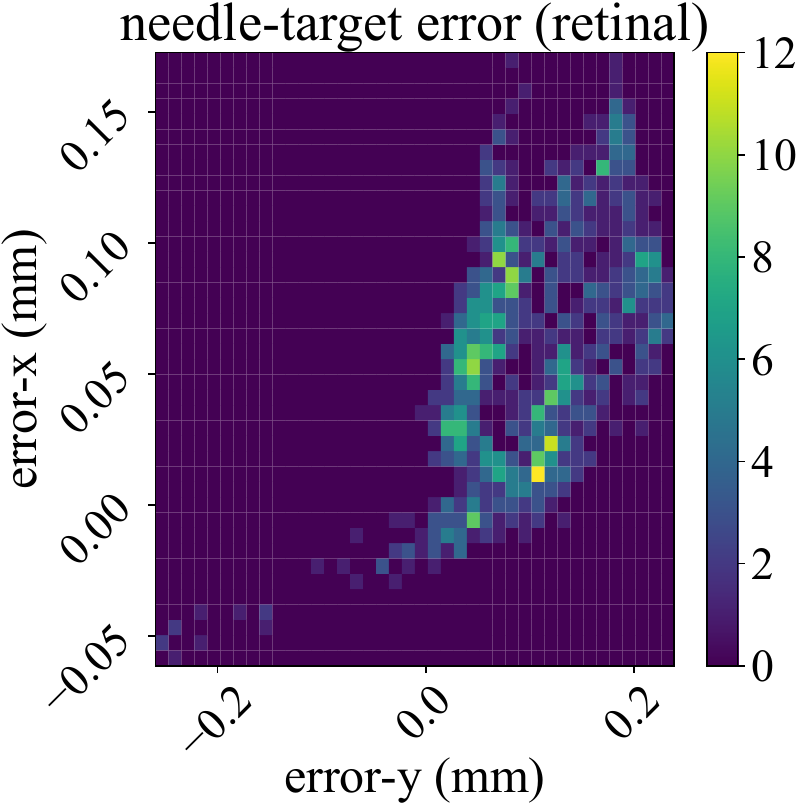}
        \caption{needle error (retinal)}
    \end{subfigure}
    \begin{subfigure}{0.245\textwidth}
        \includegraphics[width=0.99\textwidth]{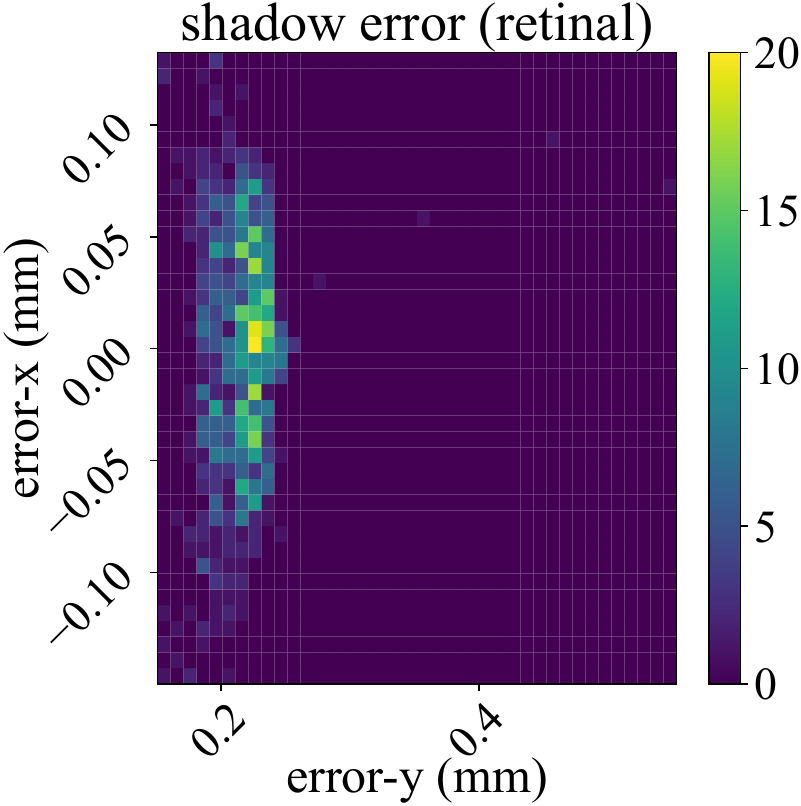}
        \caption{shadow error (retinal)}
    \end{subfigure}
    \caption{The needle and shadow's error to their target position in x-y plane after approaching.}\label{fig:xyerror}
\end{figure*}
The needle-target and shadow errors in the x-y plane in Fig.~\ref{fig:xyerror} demonstrate the overlapping of needle tip and targets of two types.
According to the error distribution, the needle tip is located within the 1mm-radius surrounding area of the target. 
This overlapping ensures that the needle tip has reached the target projected position, which is consistent with the vertical-overlapping theory in this paper and \cite{9695979}.
Furthermore, Since the common scanning diameter of iOCT is 6 mm, the approaching result shown in Fig.~\ref{fig:xyerror} proves that the approaching method in this paper can properly place the needle tip inside the iOCT's scanning range for later micron-level manipulation.

Combining the error distribution along z-axis (depth) in Fig.~\ref{fig:deptherror} and Table.~\ref{zerror_table} with the error distribution of x-y plane in Fig.~\ref{fig:xyerror}, the 3D error distribution proves that our proposed target-approaching method enables the needle tip to approach floating and retinal targets within 1 mm and allows the subsequent needle manipulations such as insertion and sucking without damaging intraocular tissues.

\begin{figure}[h]
\centering
    \begin{subfigure}{0.9\columnwidth}
        \includegraphics[width=0.95\textwidth]{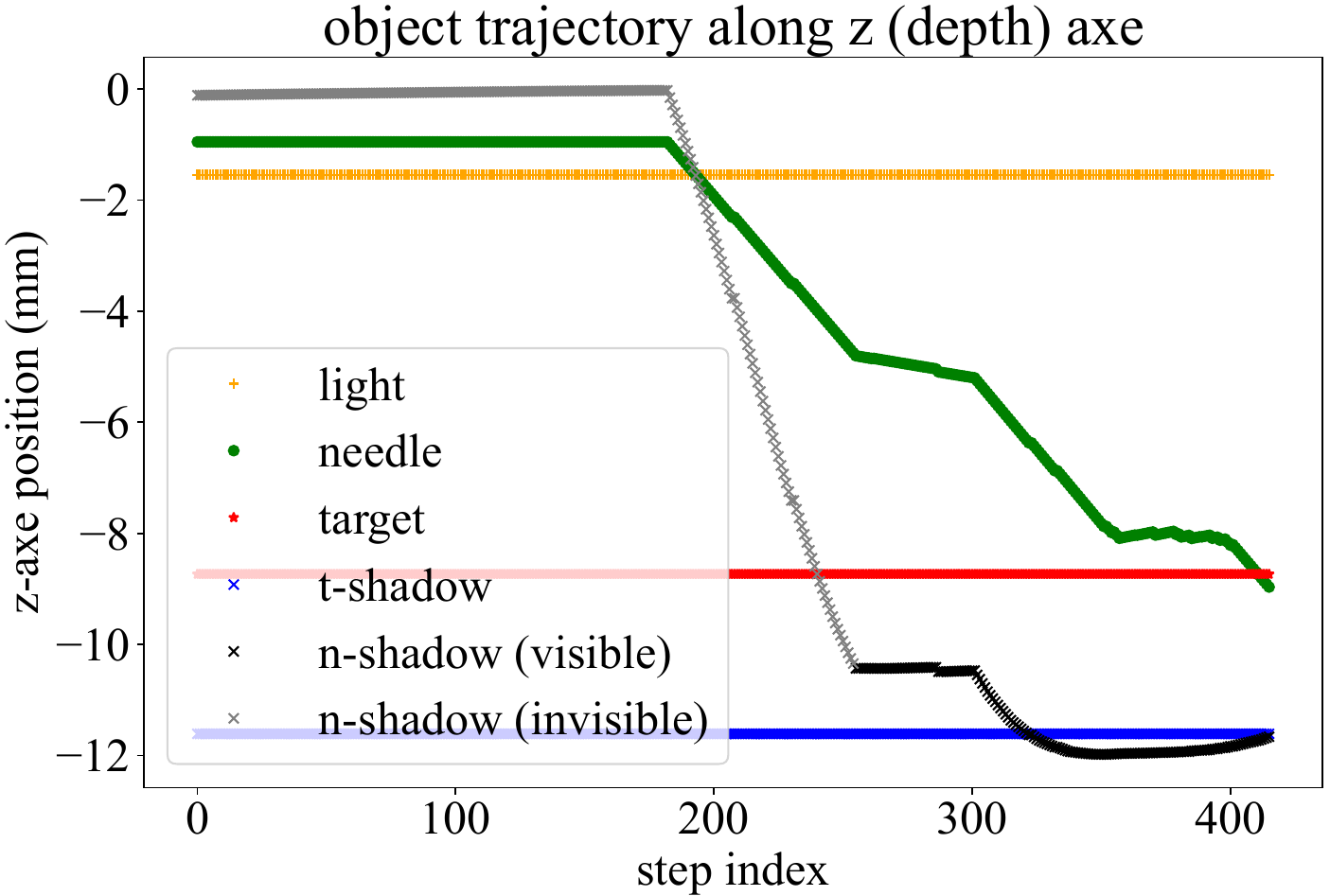}
        \caption{z-axis (floating)}
    \end{subfigure}
    \begin{subfigure}{0.9\columnwidth}
        \includegraphics[width=0.95\textwidth]{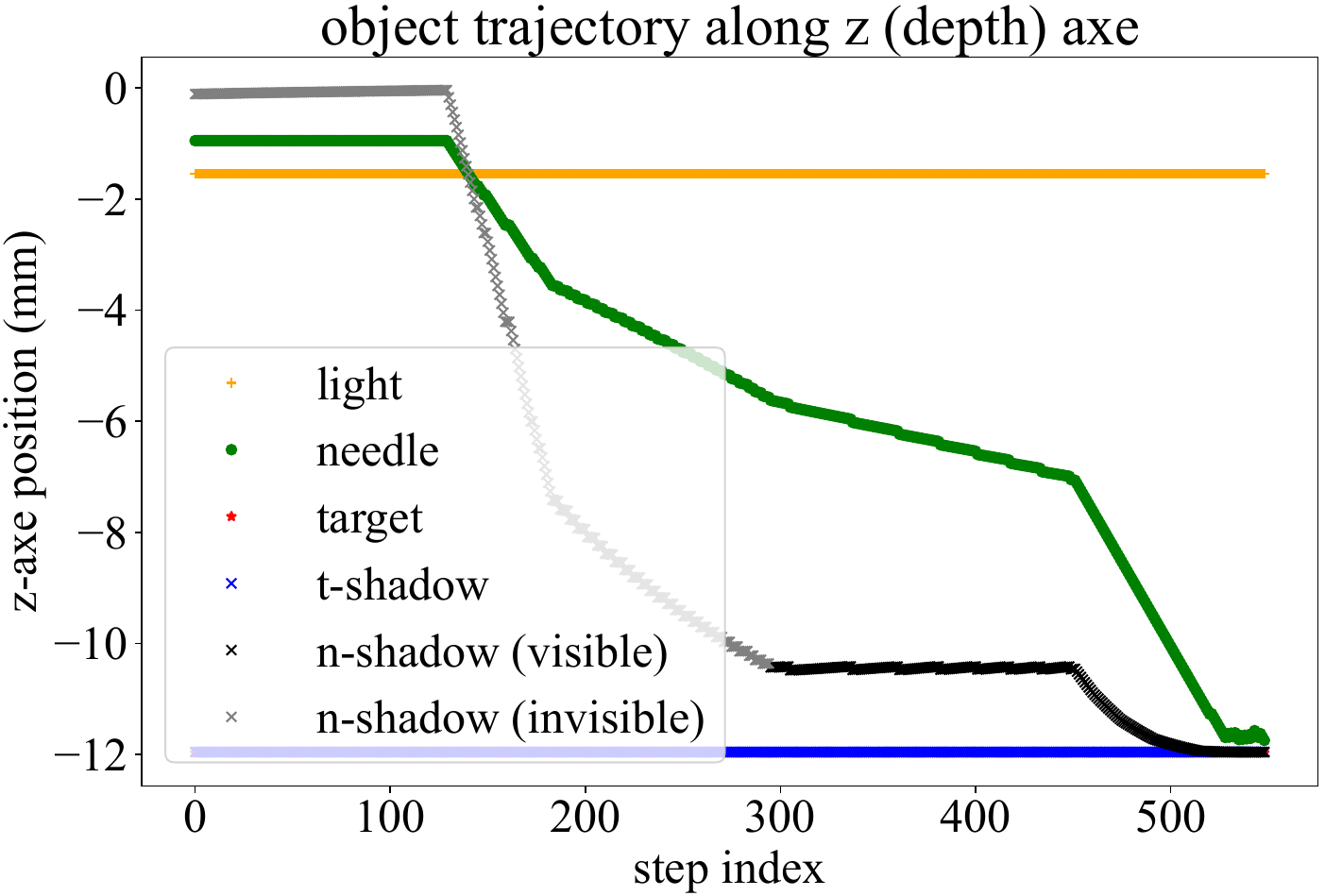}
        \caption{z-axis (retinal)}
    \end{subfigure}
    \caption{The simulated needle-tip's trajectory along z axis (depth) towards the floating and retinal targets, respectively with a given static light. }\label{fig:exp_simu_trajec}
\end{figure}
Fig.~\ref{fig:exp_simu_trajec} shows the procedure of approaching two types of targets along the z axis (depth) to help better understand how the needle tip is getting close to the target.
As for the floating target in Fig.~\ref{fig:exp_simu_trajec}(a), the needle first adjusts its orientation towards the target by horizontal rotation between step-index $[0,200]$, which doesn't change the needle tip's depth according to the spherical modeling, and then optimize the needle-tip's depth to approach the target between step-index $[200,300]$.
Simultaneously, coordinated axial insertion and vertical rotation (upwards) helps maintain the needle's depth but adjust the shadow's position until both needle tip and it shadow tip are overlapped with those of the target.

As for the retinal target in Fig.~\ref{fig:exp_simu_trajec}(b), the z (depth) trajectory also follows the predefined step division with the target overlapped with its shadow on the retina of a 12mm-radius eye.
Similar horizontal alignment and depth optimization are also achieved in the retinal-target case.
However, according to the depth trajectory, a unique depth fluctuation of the needle's shadow is observed between step-index $[300, 460]$, which is caused by the emergent retinal collision avoidance to ensure that the needle tip will not damage the retina before surgical tasks.
Afterwards, the needle keeps a smooth axial insertion towards the target to finally approach the target.
Since floating targets are always above the retinal surface and hence separated with their shadows, such emergent collision avoidance doesn't exist during the approaching procedure in the floating scenario.
\begin{figure}[h]
    \centering
    \includegraphics[width=0.8\columnwidth]{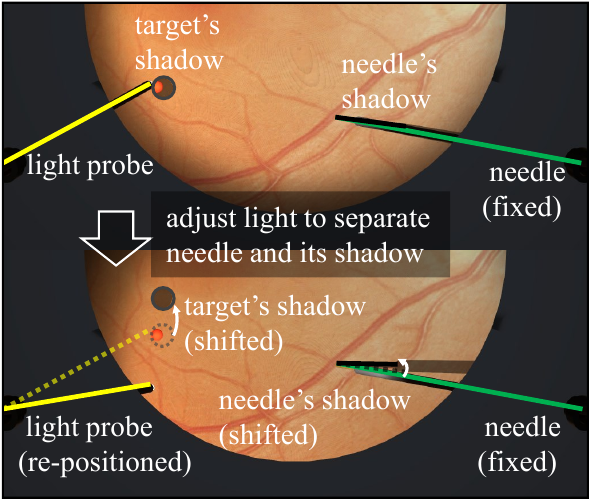}
    \caption{The difficult case that a manual light adjustment is needed to separate the needle from it's shadow to enable further target approaching, which applies to \cite{9695979}.}\label{fig:diffcase}
\end{figure}

Then, the failure case is presented in Fig.~\ref{fig:diffcase} with a potential solution.
As a typical challenge for shadow-assisted needle navigation, the overlapping of the needle and its shadow prevents the extraction of the needle shadow's tip position and orientation and thus the prediction of the needle's future trajectory.
As shown in Fig.~\ref{fig:diffcase}, the light tip is incorrectly fixed at the current position to make $p_{lp}$, $p_{n}$ and $p_{ns}$ almost on the same line, causing the failed calculation of $p_{esp}$. 
According to the light-probe control method in \cite{9695979}, a potential solution to this challenge is the automatic execution of online light-tip movement to separate the needle shadow's position away from the needle in the visible area according to the intraocular shadowing principle by rotating the light probe clockwise until the shadow appears with a closer position to the retinal center than the needle.
Meanwhile, the target's shadow is also deviated from the original position, which enlarges the distance between the needle and its shadow's predicted trajectories.
Following the dicussion of light-probe adjustment, we manually rotate the light probe in the simulator to separate the needle with its shadow, and consequently succeed the target approaching task.

\subsection{Model Test}
\begin{figure}[h]
\centering
    \includegraphics[width=0.98\columnwidth]{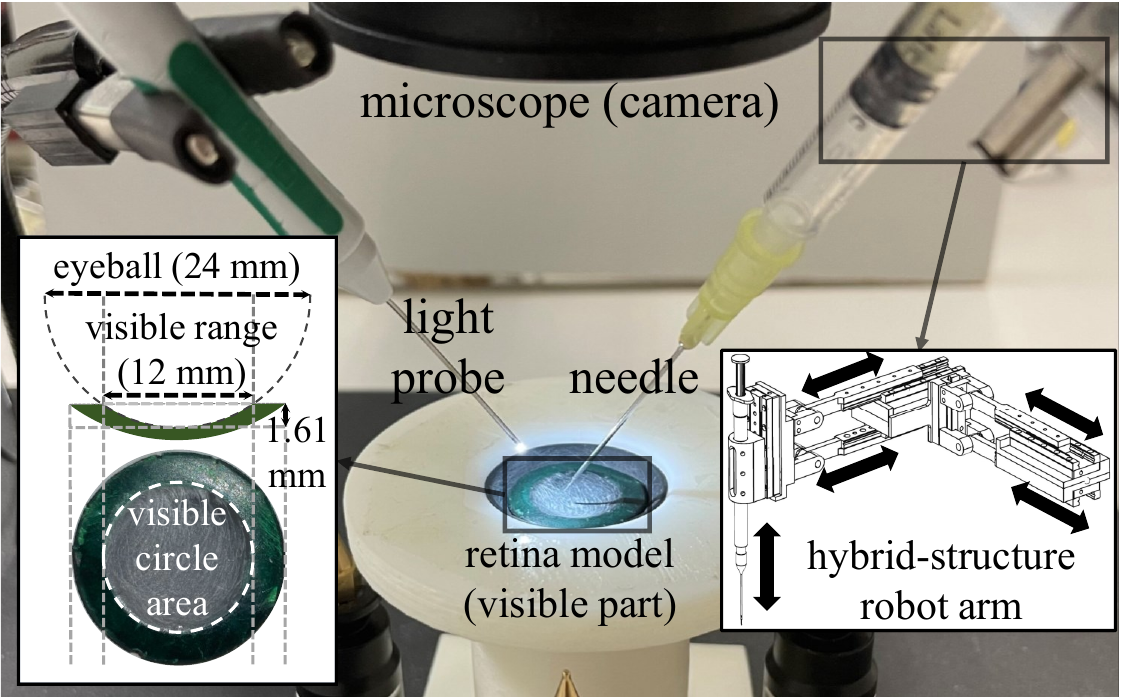}
    \caption{The hardware setup for experiments.}\label{fig:real_setup}
\end{figure}
\subsubsection{Setup}
Target-Approaching experiments are also conducted on a retina model to provide a qualitative effectiveness proof of the shadow-alignment algorithm in this paper.
The hardware setup is shown in Fig.~\ref{fig:real_setup} which contains a light probe, a retina model, a microscope and a needle mounted on an hybrid-structure robot~\cite{8818660}.

In this paper, we use a Geuder cannula (model G-34285, 23 gauge) for image segmentation and the subsequent target approaching.
An RCM-compatible actuator is used to listen to the image-based controller for motion commands and actuate certain motors.
For the simplicity of implementation, a loose robot initialization is achieved to roughly place the robot under the microscope without strict hand-eye calibration, which is compensated by the offset-based control strategy.
The needle's shadow has already been activated in the visible area.
We define the following runtime hyper-parameters: threshold to start distance-based alignment $\sigma_{close}$=100 pixels, threshold to trigger iOCT navigation $\sigma_{app}$=45 pixels, step angle for rotation $\Delta_{V/H}$=0.05$^{\circ}$ and step length for insertion $\Delta_{R}$=0.01 mm.

As for the recognition of image components, the light probe and the target are static objects that can be manually marked in the image.
Also, two types of targets are abstracted by their center points and manually set in the original microscopy image, target overlapped with its shadow at $[512, 512]$ for the retinal case, and separated as $[512,512]$ and $[671,391]$ for the floating case in the image.
Meanwhile, since no ground-truth positions of dynamic objects are given during the experiment, the needle and its shadow positions are segmented in each frame as the input of decision making.
Therefore, we train an \textbf{l-seg} segmentation model implemented by YOLOv8~\cite{yolov8_ultralytics} with 158 images in total, 126 images for training and 32 images for validation.
The segmentation model's final metrics after training are public in TABLE.~\ref{tb:real_metric} running on a hardware combination of a 6-core CPU and an RTX-A5000 GPU.
\begin{table}[h]
\centering
\caption{Training Metrics}\label{tb:real_metric}
\begin{tabular}{|c|c|c|c|}
\hline
Precision &  Recall  &  mAP50  &  mAP50-95   \\ \hline
0.981     &  0.985   &  0.984  &  0.837      \\ \hline
\multicolumn{4}{|c|}{543 epochs in total with batch size 8.}  \\ \hline
\end{tabular}
\end{table}
After segmentation to obtain contours of instruments, we model the instrument's orientation by the fitted center line of its contour and use the intersection of the center line and the contour box as the tip position.
Due to the development of advanced segmentation algorithms, we believe that real-time medical image segmentation, which is another popular research topic with advanced development, is already capable of handling common surgical scenarios. Therefore, this segmentation model is only used to extract prior image components for algorithm demonstration.

\subsubsection{Results}
The procedures of approaching two types of targets are presented in Fig.~\ref{fig:exp_real}, (a)-(d) for the retinal target and (e)-(h) for the floating target.
Here, the target (red) and its shadow (dark red) are overlapped in the retinal case, and are separated in the floating case.
As can be seen from the figure, the needle (green) is controlled together with its shadow (gray) during the approach, with both needle-target and their shadows almost overlapped.
During the "optimize" step, the needle is controlled with axial insertion to move the shadow tip (gray) towards the target shadow along the ideal shadow trajectory.
Therefore, the model test qualitatively proves the shadow utilization proposed in this paper to automatically control the needle towards the target.

Meanwhile, a sporadic shifting of the needle tip within a small area is observed, which results from unstable mask generation during image segmentation.
Therefore, the clinical deployment of the proposed shadow utilization should use advanced segmentation models to prevent mistaken needle manipulations caused by instability or failure of segmentation.
\begin{figure*}[ht]
\centering
    \begin{subfigure}{0.24\textwidth}
        \includegraphics[width=0.95\textwidth]{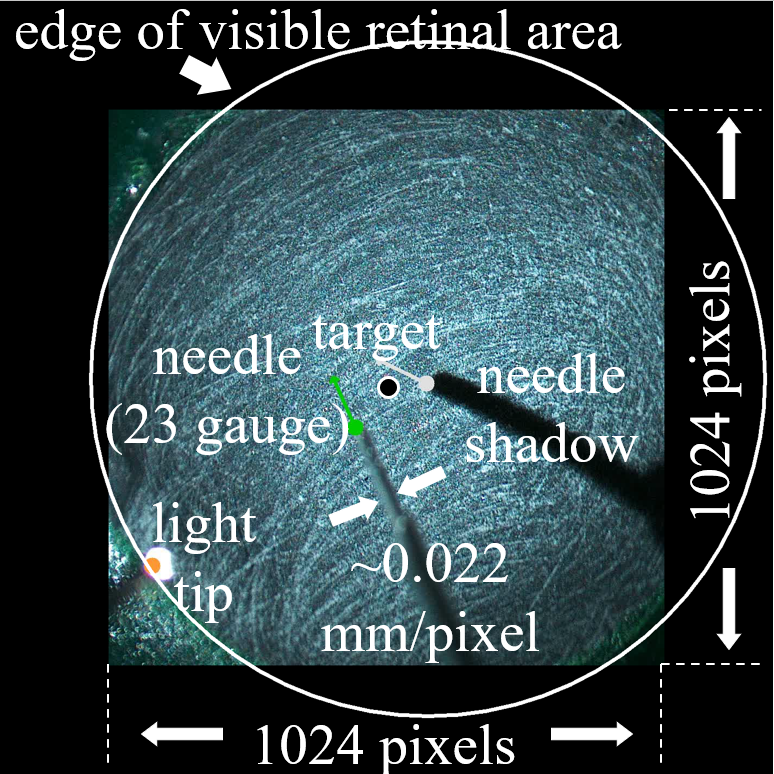}
        \caption{(retinal) init}
    \end{subfigure}
    \begin{subfigure}{0.24\textwidth}
        \includegraphics[width=0.95\textwidth]{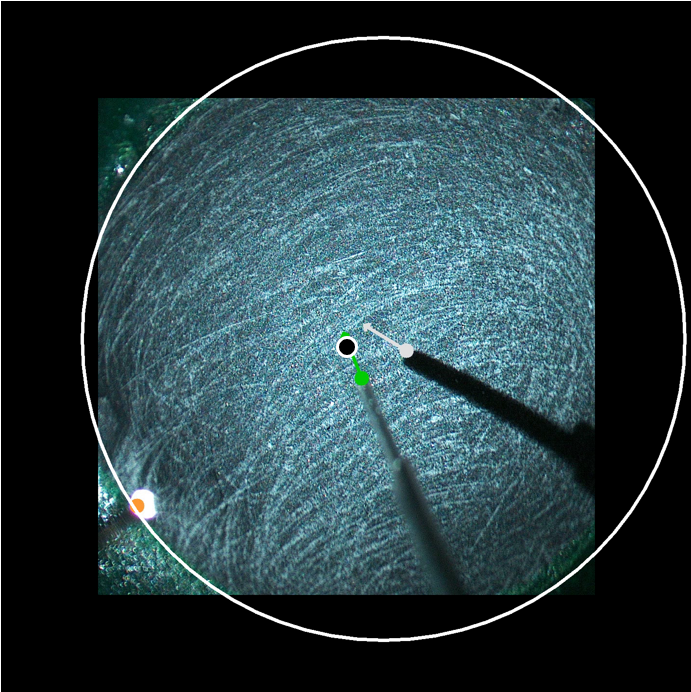}
        \caption{(retinal) aligning angle}
    \end{subfigure}
    \begin{subfigure}{0.24\textwidth}
        \includegraphics[width=0.95\textwidth]{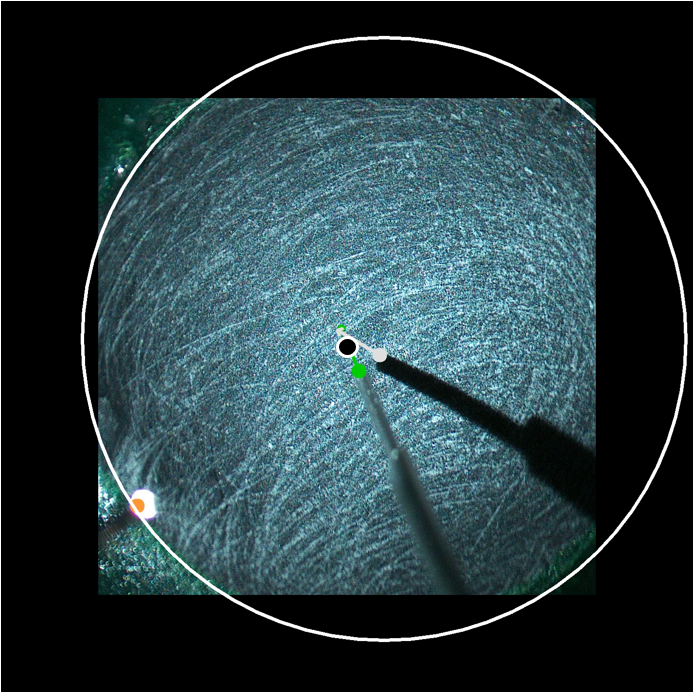}
        \caption{(retinal) insertion}
    \end{subfigure}
    \begin{subfigure}{0.24\textwidth}
        \includegraphics[width=0.95\textwidth]{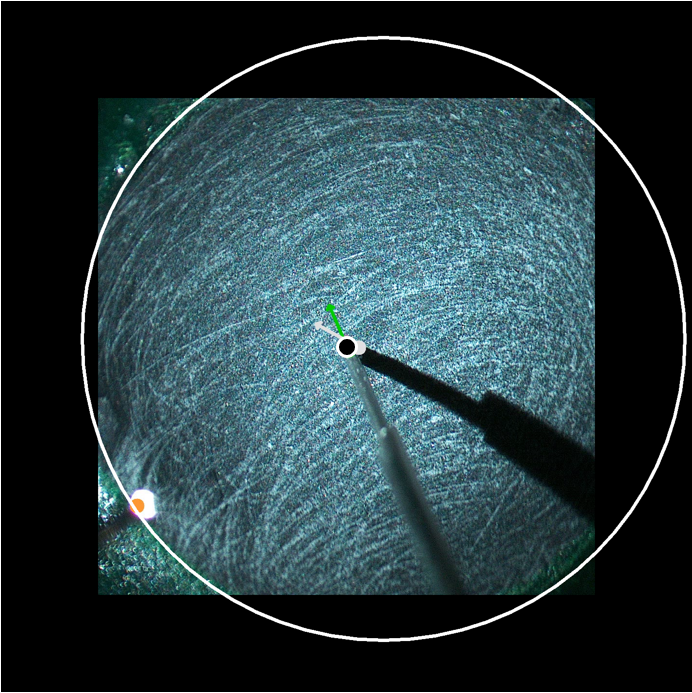}
        \caption{(retinal) aligning shadow}
    \end{subfigure}
    \begin{subfigure}{0.24\textwidth}
        \includegraphics[width=0.95\textwidth]{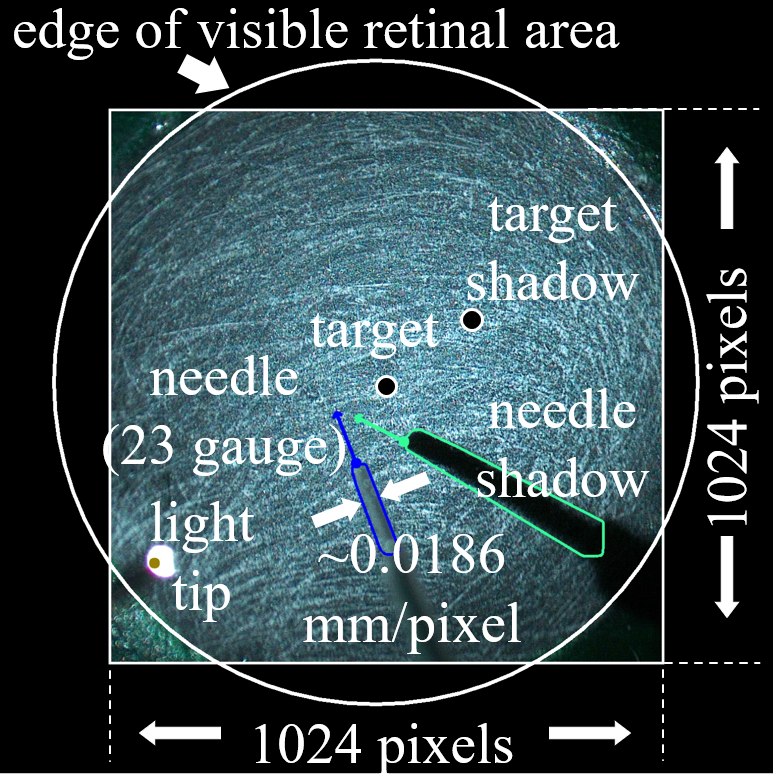}
        \caption{(floating) init}
    \end{subfigure}
    \begin{subfigure}{0.24\textwidth}
        \includegraphics[width=0.95\textwidth]{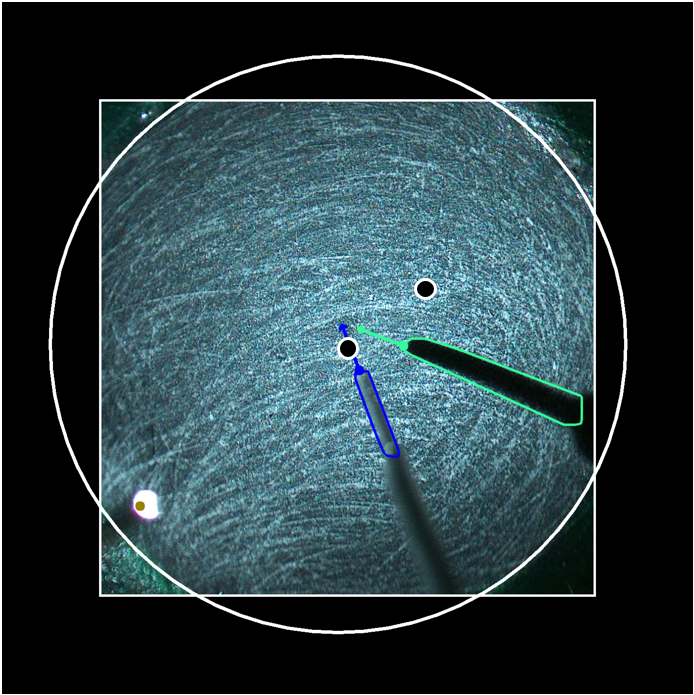}
        \caption{(floating) aligning angle}
    \end{subfigure}
    \begin{subfigure}{0.24\textwidth}
        \includegraphics[width=0.95\textwidth]{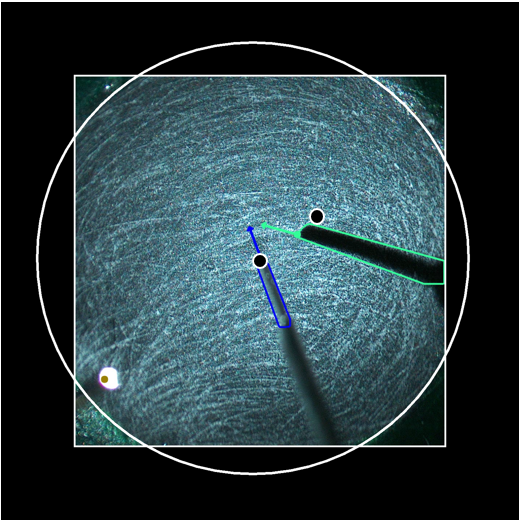}
        \caption{(floating) insertion}
    \end{subfigure}
    \begin{subfigure}{0.24\textwidth}
        \includegraphics[width=0.95\textwidth]{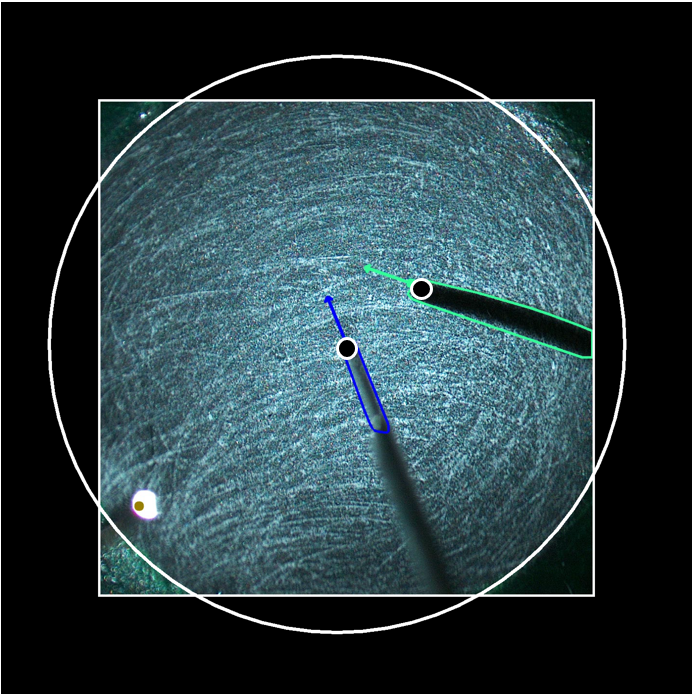}
        \caption{(floating) aligning shadow}
    \end{subfigure}
    \caption{The procedure of approaching virtual retinal ((a)-(d)) and floating ((e)-(h)) targets using our proposed alignment strategies.}\label{fig:exp_real}
\end{figure*}

\section{Conclusion and Discussion}
In ophthalmic surgeries, surgeons manipulate light probe and instruments to cast shadows within the visible area and implicitly optimize their control of instruments.
The proposed method in this paper combines this shadow-based intraocular perception with the shadow-based collision avoidance~\cite{9695979} to enable robot-assisted trajectory optimization during the needle navigation towards both floating and retinal targets.
However, the proposed method is significantly affected by the unstable segmentation quality and the coarse-grained distance estimation due to retinal transparency, bringing more challenges to guaranteeing the intraocular safety.
Moreover, the light probe's deterministic role in shadows' positioning and quality control also brings more research focus to the instrument-light coordination.

\bibliographystyle{IEEEtran}
\bibliography{ref}

\begin{thebibliography}{10}
\providecommand{\url}[1]{#1}
\csname url@rmstyle\endcsname
\providecommand{\newblock}{\relax}
\providecommand{\bibinfo}[2]{#2}
\providecommand\BIBentrySTDinterwordspacing{\spaceskip=0pt\relax}
\providecommand\BIBentryALTinterwordstretchfactor{4}
\providecommand\BIBentryALTinterwordspacing{\spaceskip=\fontdimen2\font plus
\BIBentryALTinterwordstretchfactor\fontdimen3\font minus
  \fontdimen4\font\relax}
\providecommand\BIBforeignlanguage[2]{{%
\expandafter\ifx\csname l@#1\endcsname\relax
\typeout{** WARNING: IEEEtran.bst: No hyphenation pattern has been}%
\typeout{** loaded for the language `#1'. Using the pattern for}%
\typeout{** the default language instead.}%
\else
\language=\csname l@#1\endcsname
\fi
#2}}

\bibitem{dehghani2023robotic}
S.~Dehghani, M.~Sommersperger, P.~Zhang, A.~Martin-Gomez, B.~Busam,
  P.~Gehlbach, N.~Navab, M.~A. Nasseri, and I.~Iordachita, ``Robotic navigation
  autonomy for subretinal injection via intelligent real-time virtual ioct
  volume slicing,'' \emph{arXiv preprint arXiv:2301.07204}, 2023.

\bibitem{zhou2023needle}
M.~Zhou, X.~Guo, M.~Grimm, E.~Lochner, Z.~Jiang, A.~Eslami, J.~Ye, N.~Navab,
  A.~Knoll, and M.~A. Nasseri, ``Needle detection and localisation for
  robot-assisted subretinal injection using deep learning,'' \emph{CAAI
  Transactions on Intelligence Technology}, 2023.

\bibitem{8793756}
M.~Zhou, X.~Wang, J.~Weiss, A.~Eslami, K.~Huang, M.~Maier, C.~P. Lohmann,
  N.~Navab, A.~Knoll, and M.~A. Nasseri, ``Needle localization for
  robot-assisted subretinal injection based on deep learning,'' in \emph{2019
  International Conference on Robotics and Automation (ICRA)}, 2019, pp.
  8727--8732.

\bibitem{Sommersperger:21}
\BIBentryALTinterwordspacing
M.~Sommersperger, J.~Weiss, M.~A. Nasseri, P.~Gehlbach, I.~Iordachita, and
  N.~Navab, ``Real-time tool to layer distance estimation for robotic
  subretinal injection using intraoperative 4d oct,'' \emph{Biomed. Opt.
  Express}, vol.~12, no.~2, pp. 1085--1104, Feb 2021. [Online]. Available:
  \url{https://opg.optica.org/boe/abstract.cfm?URI=boe-12-2-1085}
\BIBentrySTDinterwordspacing

\bibitem{9502523}
M.~Zhou, J.~Wu, A.~Ebrahimi, N.~Patel, Y.~Liu, N.~Navab, P.~Gehlbach, A.~Knoll,
  M.~A. Nasseri, and I.~Iordachita, ``Spotlight-based 3d instrument guidance
  for autonomous task in robot-assisted retinal surgery,'' \emph{IEEE Robotics
  and Automation Letters}, vol.~6, no.~4, pp. 7750--7757, 2021.

\bibitem{keller2018real}
B.~Keller, M.~Draelos, G.~Tang, S.~Farsiu, A.~N. Kuo, K.~Hauser, and J.~A.
  Izatt, ``Real-time corneal segmentation and 3d needle tracking in
  intrasurgical oct,'' \emph{Biomedical optics express}, vol.~9, no.~6, pp.
  2716--2732, 2018.

\bibitem{DBLP:journals/corr/abs-2301-07204}
\BIBentryALTinterwordspacing
S.~Dehghani, M.~Sommersperger, P.~Zhang, A.~Martin{-}Gomez, B.~Busam,
  P.~Gehlbach, N.~Navab, M.~A. Nasseri, and I.~Iordachita, ``Robotic navigation
  autonomy for subretinal injection via intelligent real-time virtual ioct
  volume slicing,'' \emph{CoRR}, vol. abs/2301.07204, 2023. [Online].
  Available: \url{https://doi.org/10.48550/arXiv.2301.07204}
\BIBentrySTDinterwordspacing

\bibitem{8324435}
M.~Zhou, K.~Huang, A.~Eslami, H.~Roodaki, H.~Lin, C.~P. Lohmann, A.~Knoll, and
  M.~A. Nasseri, ``Beveled needle position and pose estimation based on optical
  coherence tomography in ophthalmic microsurgery,'' in \emph{2017 IEEE
  International Conference on Robotics and Biomimetics (ROBIO)}, 2017, pp.
  308--313.

\bibitem{Weiss2018Fast5N}
J.~Weiss, N.~Rieke, M.~A. Nasseri, M.~Maier, A.~Eslami, and N.~Navab, ``Fast
  5dof needle tracking in ioct,'' \emph{International Journal of Computer
  Assisted Radiology and Surgery}, vol.~13, pp. 787--796, 2018.

\bibitem{yu2015evaluation}
H.~Yu, J.-H. Shen, R.~J. Shah, N.~Simaan, and K.~M. Joos, ``Evaluation of
  microsurgical tasks with oct-guided and/or robot-assisted ophthalmic
  forceps,'' \emph{Biomedical optics express}, vol.~6, no.~2, pp. 457--472,
  2015.

\bibitem{Ehlers1306}
\BIBentryALTinterwordspacing
J.~P. Ehlers, A.~Uchida, and S.~K. Srivastava, ``Intraoperative optical
  coherence tomography-compatible surgical instruments for real-time
  image-guided ophthalmic surgery,'' \emph{British Journal of Ophthalmology},
  vol. 101, no.~10, pp. 1306--1308, 2017. [Online]. Available:
  \url{https://bjo.bmj.com/content/101/10/1306}
\BIBentrySTDinterwordspacing

\bibitem{Tang:22}
\BIBentryALTinterwordspacing
E.~M. Tang, M.~T. El-Haddad, S.~N. Patel, and Y.~K. Tao, ``Automated
  instrument-tracking for 4d video-rate imaging of ophthalmic surgical
  maneuvers,'' \emph{Biomed. Opt. Express}, vol.~13, no.~3, pp. 1471--1484, Mar
  2022. [Online]. Available:
  \url{https://opg.optica.org/boe/abstract.cfm?URI=boe-13-3-1471}
\BIBentrySTDinterwordspacing

\bibitem{youtube}
\BIBentryALTinterwordspacing
M.~Simunovic. Sub-retinal tissue plasminogen activator for sub-macular
  haemorrhage. Youtube. [Online]. Available:
  \url{https://www.youtube.com/watch?v=OKYqsVz3Maw\&ab_channel=MatthewSimunovic}
\BIBentrySTDinterwordspacing

\bibitem{9099104}
C.~He, E.~Yang, N.~Patel, A.~Ebrahimi, M.~Shahbazi, P.~Gehlbach, and
  I.~Iordachita, ``Automatic light pipe actuating system for bimanual
  robot-assisted retinal surgery,'' \emph{IEEE/ASME Transactions on
  Mechatronics}, vol.~25, no.~6, pp. 2846--2857, 2020.

\bibitem{Trimanual_article}
H.~Moon, D.~Lee, and D.~Nam, ``Trimanual technique using assistant-controlled
  light probe illumination and wide-angle viewing system in 23-gauge sutureless
  vitrectomy for diabetic tractional retinal detachment,'' \emph{Ophthalmic
  surgery, lasers \& imaging retina}, vol.~46, pp. 73--6, 01 2015.

\bibitem{hutchens2012characterization}
T.~C. Hutchens, A.~Darafsheh, A.~Fardad, A.~N. Antoszyk, H.~S. Ying, V.~N.
  Astratov, and N.~M. Fried, ``Characterization of novel microsphere chain
  fiber optic tips for potential use in ophthalmic laser surgery,''
  \emph{Journal of Biomedical Optics}, vol.~17, no.~6, pp. 068\,004--068\,004,
  2012.

\bibitem{6290813}
B.~C. Becker, S.~Yang, R.~A. MacLachlan, and C.~N. Riviere, ``Towards
  vision-based control of a handheld micromanipulator for retinal cannulation
  in an eyeball phantom,'' in \emph{2012 4th IEEE RAS \& EMBS International
  Conference on Biomedical Robotics and Biomechatronics (BioRob)}, 2012, pp.
  44--49.

\bibitem{doi:10.1177/0278364918778352}
\BIBentryALTinterwordspacing
S.~Yang, J.~N. Martel, J.~Louis A~Lobes, and C.~N. Riviere, ``Techniques for
  robot-aided intraocular surgery using monocular vision,'' \emph{The
  International Journal of Robotics Research}, vol.~37, no.~8, pp. 931--952,
  2018. [Online]. Available: \url{https://doi.org/10.1177/0278364918778352}
\BIBentrySTDinterwordspacing

\bibitem{vbaoms}
M.~Dewan, P.~Marayong, A.~Okamura, and G.~Hager, ``Vision-based assistance for
  ophthalmic micro-surgery,'' vol. 3217-II, 09 2004, pp. 49--57.

\bibitem{9196537}
J.~W. Kim, C.~He, M.~Urias, P.~Gehlbach, G.~D. Hager, I.~Iordachita, and
  M.~Kobilarov, ``Autonomously navigating a surgical tool inside the eye by
  learning from demonstration,'' in \emph{2020 IEEE International Conference on
  Robotics and Automation (ICRA)}, 2020, pp. 7351--7357.

\bibitem{pmlr-v155-kim21a}
\BIBentryALTinterwordspacing
J.~W. Kim, P.~Zhang, P.~Gehlbach, I.~Iordachita, and M.~Kobilarov, ``Towards
  autonomous eye surgery by combining deep imitation learning with optimal
  control,'' in \emph{Proceedings of the 2020 Conference on Robot Learning},
  ser. Proceedings of Machine Learning Research, J.~Kober, F.~Ramos, and
  C.~Tomlin, Eds., vol. 155.\hskip 1em plus 0.5em minus 0.4em\relax PMLR,
  16--18 Nov 2021, pp. 2347--2358. [Online]. Available:
  \url{https://proceedings.mlr.press/v155/kim21a.html}
\BIBentrySTDinterwordspacing

\bibitem{2301.11839}
\BIBentryALTinterwordspacing
P.~Zhang, J.~W. Kim, P.~Gehlbach, I.~Iordachita, and M.~Kobilarov, ``Autonomous
  needle navigation in retinal microsurgery: Evaluation in ex vivo porcine
  eyes,'' 2023. [Online]. Available: \url{https://arxiv.org/abs/2301.11839}
\BIBentrySTDinterwordspacing

\bibitem{9695979}
Y.~Koyama, M.~M. Marinho, M.~Mitsuishi, and K.~Harada, ``Autonomous coordinated
  control of the light guide for positioning in vitreoretinal surgery,''
  \emph{IEEE Transactions on Medical Robotics and Bionics}, vol.~4, no.~1, pp.
  156--171, 2022.

\bibitem{koyama2023vitreoretinal}
Y.~Koyama, M.~M. Marinho, and K.~Harada, ``Vitreoretinal surgical robotic
  system with autonomous orbital manipulation using vector-field
  inequalities,'' in \emph{2023 IEEE International Conference on Robotics and
  Automation (ICRA)}.\hskip 1em plus 0.5em minus 0.4em\relax IEEE, May 2023,
  pp. 1--7.

\bibitem{yolov8_ultralytics}
\BIBentryALTinterwordspacing
G.~Jocher, A.~Chaurasia, and J.~Qiu, ``Ultralytics yolov8,'' 2023. [Online].
  Available: \url{https://github.com/ultralytics/ultralytics}
\BIBentrySTDinterwordspacing

\bibitem{8818660}
M.~Zhou, Q.~Yu, K.~Huang, S.~Mahov, A.~Eslami, M.~Maier, C.~P. Lohmann,
  N.~Navab, D.~Zapp, A.~Knoll, and M.~A. Nasseri, ``Towards robotic-assisted
  subretinal injection: A hybrid parallel–serial robot system design and
  preliminary evaluation,'' \emph{IEEE Transactions on Industrial Electronics},
  vol.~67, no.~8, pp. 6617--6628, 2020.

\end{thebibliography}

\end{document}